
\documentclass{sig-alternate}

\newcommand{\R}{\mathcal{R}}
\newcommand{\V}{\mathcal{V}}
\newcommand{\E}{\mathcal{E}}
\newcommand{\G}{\mathcal{G}}
\newcommand{\N}{\mathcal{N}}

\newcommand{\remove}[1]{}

\newtheorem{corollary}{\textbf{Corollary}}
\newtheorem{lemma}{\textbf{Lemma}}

\newtheorem{theorem}{\textbf{Theorem}}
\usepackage{latexsym}
\usepackage{booktabs}
\usepackage{epstopdf}
\usepackage{pgfplots}
\usepackage{url}
\usepackage{booktabs}
\usepackage{multirow}

\begin{document}
%

\title{Randomized Kaczmarz for Rank Aggregation from Pairwise Comparisons
}
%
%
%
%
%

%
\author{
%
%
Vivek S.\ Borkar \hspace{.75in} Nikhil Karamchandani \hspace{.75in} Sharad Mirani \\[1em]
Department of Electrical Engineering, IIT Bombay, Mumbai, India\\[.75em]
Emails: borkar.vs@gmail.com, nikhilk@ee.iitb.ac.in, sharad.mirani@iitb.ac.in
}
      \remove{
       \affaddr{Department of Electrical Engineering}\\
       \affaddr{IIT Bombay}\\
       \affaddr{Powai, Mumbai 400076, India}\\
       \email{borkar.vs@gmail.com}
\alignauthor
Nikhil Karamchandani\\
       \affaddr{Department of Electrical Engineering}\\
       \affaddr{IIT Bombay}\\
       \affaddr{Powai, Mumbai 400076, India}\\
       \email{nikhilk@ee.iitb.ac.in}
\and
\alignauthor
Sharad Mirani\\
       \affaddr{Department of Electrical Engineering}\\
       \affaddr{IIT Bombay}\\
       \affaddr{Powai, Mumbai 400076, India}\\
       \email{sharad.mirani@iitb.ac.in}
}


\maketitle
\begin{abstract}
We revisit the problem of inferring the overall ranking among entities in the framework of Bradley-Terry-Luce (BTL) model, based on available empirical data on pairwise preferences. By a simple transformation, we can  cast the problem as that of solving a noisy linear system, for which a ready algorithm is available in the form of the randomized Kaczmarz method. This scheme is provably convergent, has excellent empirical performance, and is amenable to on-line, distributed and asynchronous variants. Convergence, convergence rate, and error analysis of the proposed algorithm are presented and several numerical experiments are conducted whose results validate our theoretical findings.

\end{abstract}

\remove{
\category{G.1.3}{Numerical Analysis}{Numerical Linear Algebra}[Error analysis, Linear systems (direct and iterative methods)]
\category{G.3}{Probability and Statistics}{Statistical computing}

\terms{Theory, Algorithms, Performance}

\keywords{pairwise ranking; Bradley-Terry-Luce model; ordinal ranking; randomized Kaczmarz algorithm}
}
\section{Introduction}
Rank aggregation is the problem of combining multiple (partial) preferences over a collection of items into a single `consensus' ordering that best describes the available data. It finds applications in a wide variety of domains, ranging from web search \cite{BrinPage, dwork} to recommendation systems \cite{oh2015collaboratively}, and from competitive sports and online gaming systems \cite{trueskill} to crowdsourced services \cite{Chen}.

One particular category of data which is quite popular in the literature is \textit{pairwise comparisons}, for example a recommendation system enquiring which of a pair of items does a user prefer, or the result of a match between two chess players. Results of such pairwise comparisons can be used to estimate the inherent `quality' or `score' of an item, for example the skill level of a chess player, and can be modeled as noisy samples of the relative score of the items being compared. By comparing several item pairs repeatedly, one can estimate the inherent scores of the various items and in turn, use it to decide on a ranking of the items. This is the context in which this paper is placed.

In particular, we consider the popular Bradley-Terry-Luce (BTL) model \cite{BradleyTerry, Luce} for pairwise comparisons and using a simple transformation, convert the problem of inferring the item values into one of solving a noisy linear system of equations. We employ a randomized version of the widely popular Kaczmarz method \cite{Strohmer} for solving this system and present an analysis of the resulting error, in terms of the spectral properties of the underlying comparison graph. This allows us to characterize the number of pairwise comparisons needed to achieve a certain error threshold. We find that for the case where the comparison graph is an Erd\H{o}s-R\'enyi graph, i.e., item pairs are chosen uniformly at random for comparison, the total number of comparisons needed by our scheme is in fact order-optimal. We discuss online, distributed, and asynchronous variants of the scheme                                                                                                                                                                                                                                                                                                                                                                                                                                                                                                                                                                                                                                                                                                                                                                                                                                                                                                                                                                                                                                                                                                                                                                                                                                                                                                                                                                                                                                                                                                                                                                                                                                                                                                                                                                                                                                                                                                                                                                                                                                                                                                                                                                                                                                                                                                                                                                                                                                                                                                                                                                                                                                                                                                                                                                                                                                                                                                                                                                                                                                                                                                                                                                                                                                                                                                                                                                                                                                                                                                                                                                                                                                                                                                                                                                                                                                                                                                                                                                                                                                                                                                                                                                                                                                                                                                                                                                                                                                                                                                                                                                                                                                                                                                                                                                                                                                                                                                                                                                                                                                                                                                                                                                                                                                                                                                                                                                                                                                                                                                                                                                      and  run extensive numerical experiments to validate our theoretical findings.

\subsection{Related work}
There is a vast literature on rank aggregation, we only discuss the works that we feel are the most relevant to the contents of this paper. The main theme of this paper is to infer a ranking over a collection of items from noisy data, generated according to a statistical model. There is a wide variety of such probabilistic models studied in the literature, see for example \cite{qin2010new, probmodel}. \cite{braverman2009sorting, wauthier2013efficient} study the problem of ranking with noisy comparisons between item pairs, where the result of each comparison follows the true order with probability $p$ for some $p > 1/2$. Other variants include active ranking  \cite{jamieson2011active}, where the items to be compared are chosen in an adaptive and sequential fashion, and adversarial comparators \cite{acharya2014}. Another popular model is the Mallows model, which given a true ranking $\sigma^*$, randomly generates a noisy full ranking $\sigma$ with probability proportional to $\exp(-\beta d(\sigma, \sigma^*))$, where $\beta$ is a spreading parameter and $d(\cdot, \cdot)$ is a distance metric over permutations, such as the Kendall-Tau or the Kemeny distance.  \cite{braverman2009sorting} present polynomial time algorithms for identifying the true ranking over $n$ items with high probability, given $O(\log n)$ independent noisy rankings. Random Utility Models (RUMs) \cite{thurstone} present another alternative, where each item $i$ is associated with a score $w_i$. An instance of the available noisy data is a (possibly partial) ranking $\sigma$ generated by assigning a random utility $X_i$ for each item $i$, according to a conditional distribution $\mu_{i}(\cdot | w_i)$, and then ordering them. A special case of RUMs is the Plackett-Luce (PL) \cite{Luce, plackett}, where the random utilities are generated according to  Gumbel distributions. The PL model allows for an analytical characterization of the Maximum Likelihood Estimator \cite{hunter2004mm, maystre2015fast} and the optimal number of independent partial rankings required to achieve a target error \cite{hajek2014minimax}. In this work, we focus on the Bradley-Terry-Luce (BTL) model \cite{BradleyTerry, Luce}, which is a special case of the PL model where only pairwise comparisons are allowed. Rank aggregation under the BTL model has received a lot of attention recently \cite{suhspectral, shah2015estimation, rajkumar2014statistical}. The work closest to ours is \cite{negahban2012rank} which proposes an iterative algorithm called Rank Centrality for estimating the underlying item scores. The algorithm is based on the Markov Chain Monte Carlo (MCMC) method, with the transition matrix constructed  using results of various pairwise comparisons, and the score estimate vector being the leading eigenvector. In contrast, we formulate the problem as one of solving a noisy system of linear equations and use the randomized version of the iterative Kaczmarz solution method which is provably convergent, has excellent empirical performance, and is amenable to on-line, distributed and asynchronous variants. In spirit, our work is also close to \cite{stefani1977football, stefani1980improved, jiang2011statistical, hirani2010least} which pose rank aggregation as a least squares problem.

While most of the literature mentioned above considers the case of one true ranking, there has been recent work on collaborative ranking for a pool of users \cite{lu2014individualized, liu2009probabilistic, park2015preference, oh2015collaboratively, Wu2015}. Finally, unlike the works mentioned above, a non-parametric model for distributions over rankings has been proposed in \cite{jagabathula2008inferring}.
\remove{
\begin{itemize}
\item Probabilistic models on permutations: \cite{qin2010new, probmodel},
\item Non-parameteric model of distributions over permutations: \cite{jagabathula2008inferring}
\item Parametric model, noiseless comparisons: \cite{jamieson2011active}
\item Parametric model, noisy comparisons : \cite{braverman2009sorting, wauthier2013efficient}
\item Parametric model, distance based: Mallows model, \cite{braverman2009sorting}
\item Parametric model, Random Utility model, or Thurstone model: Partial Rankings \cite{hajek2014minimax}, Plackett-Luce \cite{hunter2004mm}, Bradley-Terry (special case of PL, where only pairwise comparisons) \cite{negahban2012rank, suhspectral} Topology bound for Thurstone and BTL \cite{shah2015estimation}, statistical convergence properties \cite{rajkumar2014statistical}
\item Many rankings (preferences) from comparisons: \cite{lu2014individualized, liu2009probabilistic, park2015preference, oh2015collaboratively}
\end{itemize}
}

\section{The problem and the algorithm}
\label{Sec:Problem}
	
\ \\
We consider $N >>1$ entities identified with the nodes of an undirected graph $\G = (\V, \E)$, where $\V$ is its node set (thus $|\V| = N$) and $\E$ its edge set, with $|\E| = M$ (say). We assume that the graph is connected. Following the Bradley-Terry-Luce (BTL) model, we postulate `node weights' $w_i > 0$ associated with node $i \in \V$. Let $[\underline{w}, \bar{w}]$ denote the dynamic range of the $w_i$'s and $b = \bar{w} / \underline{w}$. Set $p_{ij} \ (:=$ the probability that $i$ is preferred over $j) = \frac{w_i}{w_i + w_j}$. Given $(i, j) \in \E$, let the outcome $X_{ij}^{k}$ of the $k$-th comparison between $i$ and $j$ be defined as $1$ if $i$ is preferred over $j$, and $0$ otherwise. Then, according to the BTL model,
\begin{align}
X_{ij}^{k} &=
\begin{cases}
1, \mbox{ with probability } p_{ij} \\
0, \mbox{ otherwise}
\end{cases}
\label{Eqn:BTL}
\end{align}
For each $(i, j) \in \E$, we will in general assume that multiple such comparisons are made, and the corresponding outcomes are assumed to be independent across $i, j$, and $k$. Thus what we have are estimates $\hat{p}_{ij}$ of $p_{ij}$'s, viz.,
\begin{equation}
\label{Eqn:ProbEstimate}
\hat{p}_{ij} := \frac{\sum_kX^k_{ij}}{\sum_kX^k_{ij} + \sum_kX^k_{ji}},
 \end{equation}
the fraction of times $i$ was preferred over $j$. The nodes are to be ranked according to the decreasing values of $w_{\cdot}$, based on estimates thereof. These  have to be computed from available data regarding observed preferences of a population that gives pairwise preferences among neighboring nodes of $\G$ (rather, we consider a pair of nodes neighbors when such data is available for them). Thus
\begin{eqnarray*}
\hat{p}_{ij} \approx p_{ij} &=& \frac{w_i}{w_i + w_j} \\
\Longrightarrow \frac{w_j}{w_i} &=& \frac{1}{p_{ij}} - 1 \approx \frac{1}{\hat{p}_{ij}} - 1 \\
\Longrightarrow \log w_i - \log w_j &=& - \log\left(\frac{1}{p_{ij}} - 1\right) \approx - \log\left(\frac{1}{\hat{p}_{ij}} - 1\right).
\end{eqnarray*}
Set $v_i := \log w_i, \ i \in \V, \ y'_{ij} := - \log\left(\frac{1}{p_{ij}} - 1\right)$.  Let $v = [v_1, \cdots, v_N]^T$ and $y' :=$ the vector of $y'_{ij}$'s lexicographically arranged, after retaining only one of the pair $y'_{ij}, y'_{ji}$ for each $(i,j)$, say the smaller one if they are unequal and either one if they are equal. This removes redundancy, since $p_{ij} + p_{ji} = 1$, leading to
$$y'_{ij} = -\log\left(\frac{1}{1 - \frac{1}{1 + e^{-y'_{ji}}}} - 1\right).$$
  We retain the edge $(i, j)$  and drop the edge $(j,i)$ if $y'_{ij}$ is retained. The graph is now directed with the same node set as before. We continue to refer to it as $\G = (\V, \E)$  by abuse of notation.  The presence of an edge $(i,j)$ now means that $j$ is preferred over $i$ in at least half the samples. We assign a direction to the edge $(i,j)$ from $i$ to $j$ if $y'_{ij}$ is retained and $j$ to $i$ if not. Denote by $L \in \R^{N\times M}$ the incidence matrix associated with the graph, i.e., the node-edge matrix such that if we consider the column, say $l$, corresponding to edge $(i,j)$ with direction from $i$ to $j$, $(i, l)$ is $1$, $(j, l)$ is -1, and all other elements are $0$. Then  we can cast the above relationship as $y' = L^Tv$. We do not, however, have access to $y'$. What we have instead is $y :=$ the vector of $y_{ij}$'s, where $y_{ij} := - \log\left(\frac{1}{\hat{p}_{ij}} - 1\right)$. Thus what we have is
$y = L^Tv \ +$ noise. Our problem then is to estimate $v$. Casting it as the problem of minimizing the quadratic error criterion $\|y - L^Tv\|^2$ over $v$  leads to the optimality equation
\begin{equation}
Ly = LL^T\hat{v}, \label{linear}
\end{equation}
where $LL^T$ is in fact the Laplacian matrix for the graph $\G$ and $\hat{v}$ denotes the desired estimate. Our problem has now been reduced to that of solving a noisy linear system of equations. Note that we have an underdetermined system of equations, since the Laplacian matrix $LL^T$ is rank deficient. In fact, the eigenvector corresponding to eigenvalue $0$ is the all-one vector and so $\hat{v}$ can be determined only up to a constant-vector shift. For solving a linear system of equations, a randomized version of Kaczmarz algorithm \cite{Strohmer} can be used. We describe this next.  

The $(i,i)$th diagonal element of $LL^T$ is $N(i) :=$ the total degree (in-degree + out-degree) of node $i$. For $j \neq i$, the $(i,j)$th element of $LL^T$ is $-1$ if $i,j$ are neighbors, $0$ otherwise. Let $a_i := $ the $i$th row of $LL^T$ and $b = Ly$. Then
$$\|a_i\| = \sqrt{N(i)(N(i) + 1)}.$$
Let $\N(i)$ denote the set of neighbours of $i$. The randomized Kaczmarz algorithm for solving a system of linear equations $Ax = b$ is given by 
\begin{eqnarray}
\lefteqn{x(n+1) = x(n) \ +} \nonumber \\
&& \sum_iI\{\xi_n = i\}\left(\frac{b_i - \langle a_i, x(n)\rangle}{\|a_i\|^2}a_i^T\right) .\label{Kac0}
\end{eqnarray}
Here $\{\xi_n\}$ are IID random variables taking values in the set $\{1, 2, \cdots, N\}$ with $p_i := P(\xi_m = i) > 0 \ \forall i$. In the present set-up, this translates into
\begin{eqnarray}
\lefteqn{x(n+1) = x(n) + \sum_iI\{\xi_n = i\}\times \frac{1}{\|a_i\|^2}\times} \nonumber \\
&&\left(\sum_{\{j : (i,j) \in \E\}}(y_{ij} - y_{ji}) - (x_i(n) - x_j(n))\right)a_i^T \nonumber \\
\ && \ \label{Kac}
\end{eqnarray}
where $x(n)$ corresponds to the estimate for $\hat{v}$ at the $n^{th}$ iteration. Recall that for each pair $i, j \in \V$, at most one of $y_{ij}, y_{ji}$ is non-zero. \\

Here the idea is to update one component of the iteration at a time and $\xi_n :=$ the index of the component chosen at time $n$. In classical Kaczmarz scheme, $\xi_n$ is periodic in a round robin manner. We stick to the randomized scheme in view of the proven performance gains for it over the classical set-up \cite{Strohmer}, \cite{Freris}, and its better adaptability for on-line scheme that we describe later. 

\section{Convergence of the algorithm}
\ \\
We now discuss convergence and convergence rate for a general randomized Kaczmarz scheme, with the ranking problem considered in this paper being a special case. e consider a linear system \begin{equation}
Ax = b \label{system}
\end{equation}
which may be underdetermined (as in our case), exactly determined, or overdetermined and consistent. The general randomized Kaczmarz scheme is
\begin{equation}
x(n+1) = x(n) + \sum_iI\{\xi(n)= i\}\left(\frac{b_i - \langle a_i, x(n)\rangle}{\|a_i\|^2}\right)a_i^T, \ \label{eqn-i}
\end{equation}
for $1 \leq i \leq N$. Introduce the notation
\begin{eqnarray*}
\check{b}_i &:=&\frac{b_i}{\|a_i\|}, \ \check{b} := [\tilde{b}_1, \cdots, \tilde{b}_N]^T, \\
\check{a}_i &:=& \left(\frac{1}{\|a_i\|}\right)a_i, \\
\textbf{1} &=& [1,1,\cdots, 1]^T, \theta := [0,0, \cdots, 0]^T, \\
H &:=& \left\{r : z^Tr = z^Tx(0), \ \forall \ z \ \mbox{such that} \ Az = \theta\right\} \\
H_0 &:=& \left\{r : z^Tr = 0,  \ \forall \ z \ \mbox{such that} \ Az = \theta\right\}
\end{eqnarray*}
For the problem studied in this paper, $A = LL^T$ and we have $Az = \theta$ only for $z = \textbf{1}$. So $H$ consists of all vectors $y$ such that $\sum_{i} r_i = \sum_{i} x_i(0)$.\\

For any $z$ such that $Az = \theta$, we have from \eqref{eqn-i} that
\begin{eqnarray*}
z^Tx(n+1) &=& z^Tx(n) + \theta = z^Tx(n).
\end{eqnarray*}
Hence the iterates in the randomized Kaczmarz scheme always remain in the affine space $H$. ($H$ is the whole space for exactly determined and consistent overdetermined $A$.)\\

We have the following simple lemma.
\begin{lemma}
There is a unique solution $x^*$ in $H$ to $Ax = b$.
\end{lemma}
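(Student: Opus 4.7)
The plan is to establish existence and uniqueness separately using a simple orthogonal decomposition relative to $\ker(A)$. By assumption the system $Ax=b$ is consistent, so the full solution set is the affine subspace $x_p + \ker(A)$ for any particular solution $x_p$. The set $H$ is also an affine subspace, namely $x(0) + H_0$, so I need to show the two affine subspaces intersect in exactly one point.

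For existence, I would orthogonally decompose $x(0) - x_p = u + w$ with $u \in \ker(A)$ and $w \in \ker(A)^\perp$. Setting $x^* := x_p + u$, we have $Ax^* = Ax_p = b$ because $u \in \ker(A)$. For any $z \in \ker(A)$, $z^T x^* - z^T x(0) = z^T(x_p + u - x(0)) = z^T(u - (u+w)) = -z^T w = 0$ since $w \perp \ker(A)$. Hence $x^* \in H$.

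For uniqueness, suppose $x_1^*, x_2^* \in H$ both satisfy $A x = b$. Then $d := x_1^* - x_2^* \in \ker(A)$, and for every $z \in \ker(A)$, $z^T d = z^T x_1^* - z^T x_2^* = z^T x(0) - z^T x(0) = 0$. Taking the particular choice $z = d \in \ker(A)$ gives $\|d\|^2 = 0$, so $x_1^* = x_2^*$.

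The argument is essentially linear algebra with no real obstacle; the only point requiring a touch of care is to keep straight the two affine subspaces ($x_p+\ker(A)$ and $x(0)+H_0$) and to note that $H_0$ is precisely $\ker(A)^\perp$, which is what makes the intersection a single point. In the paper's concrete setting, $\ker(A) = \operatorname{span}(\mathbf{1})$, so $H$ is the hyperplane $\{r : \sum_i r_i = \sum_i x_i(0)\}$, and both existence and uniqueness reduce to the familiar observation that any solution can be uniquely shifted by a multiple of $\mathbf{1}$ to match this sum.
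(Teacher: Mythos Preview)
Your uniqueness argument is exactly the paper's: from $A(x_1-x_2)=\theta$ and the definition of $H$ one takes $z=x_1-x_2$ to get $\|x_1-x_2\|^2=0$. The paper's proof stops there and does not argue existence at all; you go further and supply it via the orthogonal decomposition $x(0)-x_p=u+w$ with $u\in\ker(A)$, $w\in\ker(A)^\perp$, which is the natural way to see that the affine solution set $x_p+\ker(A)$ meets $H=x(0)+\ker(A)^\perp$ in a point. So your proof is correct and strictly more complete than the paper's, while the shared part (uniqueness) uses the same idea.
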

\begin{proof}
Suppose there are two distinct solutions $x_1, x_2$ in $H$ to $Ax = b$.  Then, we have $A(x_1 - x_2) = \theta$. From the definition of $H$, we must have
$$
(x_1 - x_2)^T x_1 = (x_1 - x_2)^Tx_2 \Longrightarrow \| (x_1 - x_2) \|^2 = 0,
$$
which is a contradiction, since we assumed $x_1 \neq x_2$.
\end{proof}

Define
\begin{eqnarray*}
e(n) &:=& x(n) - x^*, \ \check{e}(n) := \frac{e(n)}{\|e(n)\|}, \\
s^* &:=& \underset{\{s : \|s\| = 1, s \in H_0\}}{\operatorname{argmin}}\left(\sum_ip_i|\langle \check{a}_i, s\rangle|^2\right).
\end{eqnarray*}
Thus, $s^*$ is the  eigenvector of the non-negative definite matrix $S :=\sum_ip_i\check{a}_i^T\check{a}_i$ corresponding to the minimum non-zero eigenvalue $\lambda_{min}$ of $S$. Further, the minimum value of the quantity being minimized is in fact $\lambda_{min}$. Convergence of the randomized Kaczmarz scheme has been widely studied, see for example \cite{Strohmer, Deanna1, Deanna2, gower2015randomized}. The following result establishes convergence and provides a lower bound on the rate of convergence. \\
\begin{theorem}
\label{Thm:RateK}
Suppose $\lambda_{min} \in (0, 1)$. Then $e(n) \to 0$ a.s.\ and $E\left[\|e(n)\|^2\right] \to 0$ exponentially. In particular, 
$$
E\left[\|e(n)\|^2\right] \le (1 - \lambda_{min})^n \cdot E\left[\|e(0)\|^2\right].
$$
\end{theorem}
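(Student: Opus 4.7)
The plan is to set up a one-step contraction on $\|e(n)\|^2$ in conditional expectation, iterate it to get the geometric decay, and then deduce almost sure convergence by a standard supermartingale / Borel--Cantelli argument.

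First I would exploit the fact that $Ax^* = b$ to rewrite the residual: $b_i - \langle a_i, x(n)\rangle = -\langle a_i, e(n)\rangle$. Substituting into the update \eqref{eqn-i} gives
\[
e(n+1) = e(n) - \sum_i I\{\xi_n = i\}\,\frac{\langle a_i, e(n)\rangle}{\|a_i\|^2}\, a_i^T.
\]
On the event $\{\xi_n = i\}$ the correction is the orthogonal projection of $e(n)$ onto $\mathrm{span}(a_i^T)$, so by the Pythagorean identity
\[
\|e(n+1)\|^2 = \|e(n)\|^2 - \bigl|\langle \check a_i, e(n)\rangle\bigr|^2.
\]
Taking conditional expectation with respect to $\mathcal{F}_n := \sigma(x(0),\xi_0,\dots,\xi_{n-1})$ yields
\[
E\bigl[\|e(n+1)\|^2 \,\big|\, \mathcal{F}_n\bigr] = \|e(n)\|^2 - \sum_i p_i\bigl|\langle \check a_i, e(n)\rangle\bigr|^2 = \|e(n)\|^2 - e(n)^T S\, e(n).
\]

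The key step is now to invoke the variational characterization of $\lambda_{\min}$. Since both $x(n)$ and $x^*$ lie in $H$ (the iterates stay in $H$ by the computation $z^Tx(n+1)=z^Tx(n)$ already shown, and $x^*\in H$ by the preceding lemma), their difference $e(n)$ belongs to $H_0$. By the definition of $s^*$ and $\lambda_{\min}$ as the minimum of $s^T S s$ over unit vectors $s\in H_0$, one has $e(n)^T S e(n) \ge \lambda_{\min}\|e(n)\|^2$. Therefore
\[
E\bigl[\|e(n+1)\|^2 \,\big|\, \mathcal{F}_n\bigr] \le (1-\lambda_{\min})\,\|e(n)\|^2.
\]
Taking full expectation and iterating produces the claimed bound $E[\|e(n)\|^2] \le (1-\lambda_{\min})^n E[\|e(0)\|^2]$, which tends to zero exponentially since $\lambda_{\min}\in(0,1)$.

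For the almost sure statement, I would observe that the displayed inequality makes $\|e(n)\|^2$ a nonnegative supermartingale (up to the deterministic contraction factor), so it converges almost surely to a finite limit. Combined with $E[\|e(n)\|^2]\to 0$, this forces the limit to be $0$ a.s.; alternatively, Markov's inequality plus the geometric bound gives $\sum_n P(\|e(n)\|>\varepsilon)<\infty$ for every $\varepsilon>0$, and Borel--Cantelli concludes. The main conceptual obstacle is the verification that $e(n)\in H_0$ so that the restricted spectral bound applies; everything else is a routine Pythagorean expansion and iteration.
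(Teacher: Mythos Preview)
Your proposal is correct and follows essentially the same route as the paper: rewrite the update in terms of $e(n)$, use the Pythagorean identity to get the one-step drop, condition and apply the variational definition of $\lambda_{\min}$ over $H_0$, then iterate; the paper likewise invokes Markov plus Borel--Cantelli for the almost sure claim. Your explicit justification that $e(n)\in H_0$ (because both $x(n)$ and $x^*$ lie in $H$) is the only point the paper leaves implicit, so if anything your write-up is slightly more complete.
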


The proof of the above result has been included in the appendix for completeness. \cite{Strohmer} proposed the choice $p_i = \frac{\|a_i\|^2}{\sum_j\|a_j\|^2}$ and the following argument shows that $\lambda_{min} \in (0, 1)$ for this choice. Let $\| \cdot  \|_F$ denote  the Frobenius  norm of a non-negative definite matrix. Let $\lambda_{min}^*$ denote the minimum non-zero eigenvalue of $A^TA$. With above choice of $p_i$, we have
\begin{eqnarray*}
\sum_ip_i\check{a}_i^T\check{a}_i &=& \frac{1}{\|A\|_F^2}\sum_ia_i^Ta_i \\
\Longrightarrow \sum_ip_i|\langle\check{a}_i, s^*\rangle|^2 &=& \lambda_{min} \\
&=& \frac{1}{\|A\|_F^2}\lambda_{min}^* \\
&=& \frac{\lambda_{min}^*}{\mbox{tr}(A^TA)} \ <  \ 1.
\end{eqnarray*}
Thus by choosing $p_i = \frac{\|a_i\|^2}{\sum_j\|a_j\|^2}$, we are guaranteed exponential convergence for the randomized Kaczmarz algorithm. Finally, specializing the result to our problem, we have:\\

\begin{corollary}
Almost surely, $x(n) \to x^* :=$ the unique solution to (\ref{system}) satisfying $\sum_ix^*_i = \sum_ix_i(0)$ and $$E\left[\|x(n) - x^*\|^2\right] \to 0$$ at an exponential rate.
\end{corollary}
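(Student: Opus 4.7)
The plan is to specialize Theorem~\ref{Thm:RateK} to our rank-aggregation setup by verifying the three hypotheses needed: (i) the coefficient matrix in \eqref{linear} admits a clean description of its null space so that $H$ and $H_0$ take the stated form, (ii) the solution in $H$ is unique, and (iii) the chosen sampling distribution $\{p_i\}$ forces the critical spectral quantity $\lambda_{min}$ into $(0,1)$.

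First I would instantiate $A = LL^T$ and $b = Ly$ in the abstract framework of \eqref{system}. Because the graph $\G$ is connected, the Laplacian $LL^T$ is known to have rank $N-1$ with one-dimensional null space $\operatorname{span}(\mathbf{1})$. Hence $Az = \theta$ iff $z = c\mathbf{1}$, so
\[
H \;=\; \Bigl\{r \in \R^N : \mathbf{1}^Tr = \mathbf{1}^Tx(0)\Bigr\} \;=\; \Bigl\{r : \textstyle\sum_i r_i = \sum_i x_i(0)\Bigr\},
\]
and similarly $H_0 = \{r : \sum_i r_i = 0\}$. Since $b = Ly$ lies in the range of $LL^T$, the system $LL^T x = Ly$ is consistent, and by the Lemma there is a unique solution $x^* \in H$, which identifies the limit claimed in the corollary.

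Next, I would take $p_i = \|a_i\|^2/\sum_j\|a_j\|^2$ as suggested in the text preceding the corollary. The calculation there shows
\[
\lambda_{min} \;=\; \frac{\lambda_{min}^*}{\operatorname{tr}(A^TA)},
\]
where $\lambda_{min}^*$ is the smallest nonzero eigenvalue of $A^T A = (LL^T)^2$. Because $LL^T$ is rank $N-1$ with its one nontrivial null direction parallel to $\mathbf{1}$, $\lambda_{min}^* > 0$ is simply the square of the algebraic connectivity (Fiedler value) of $\G$, which is strictly positive for connected $\G$. Thus $\lambda_{min} \in (0,1)$, which is exactly the hypothesis of Theorem~\ref{Thm:RateK}.

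Applying Theorem~\ref{Thm:RateK} then delivers, simultaneously, that $e(n) = x(n) - x^* \to 0$ almost surely and $E[\|e(n)\|^2] \le (1-\lambda_{min})^n E[\|e(0)\|^2]$, proving both assertions of the corollary. The only nontrivial step is the spectral bookkeeping: confirming that the null space of $LL^T$ is exactly $\operatorname{span}(\mathbf{1})$ for connected $\G$ so that $H_0$ coincides with the orthogonal complement of that null space and $\lambda_{min}$ is genuinely the smallest \emph{nonzero} eigenvalue of $S$ rather than an accidental zero; everything else is a direct substitution into the already-established general result.
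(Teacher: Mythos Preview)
Your proposal is correct and follows essentially the same route as the paper: the paper does not give a separate proof of the corollary but has already assembled all the ingredients you invoke---the identification of $\ker(LL^T)=\operatorname{span}(\mathbf{1})$ and the resulting description of $H$, Lemma~1 for uniqueness of $x^*\in H$, the computation showing $\lambda_{min}=\lambda_{min}^*/\operatorname{tr}(A^TA)\in(0,1)$ for the Strohmer--Vershynin choice of $p_i$, and Theorem~\ref{Thm:RateK} for the convergence statements. Your only additions are the explicit remark that $Ly\in\operatorname{range}(LL^T)$ (ensuring consistency) and the identification of $\lambda_{min}^*$ with the square of the Fiedler value, both of which are correct and helpful elaborations.
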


We now briefly comment on the complexity of the proposed scheme. In each iteration, we need to calculate the $(\sum (y_{ij} - y_{ji})$ $- (x_i(n) - x_j(n)))$ only for the neighbours of node $i$. In terms of time complexity, this take $O(N(i))$ number of computations. Also, for each iteration we update only the node chosen in that iteration along with its neighbors $= N(i) + 1$ =  $O(N(i))$ (non-zero entries in the row $a_i$). Thus the total number of computations per iteration is $S \propto O(N(i))$. 
In the case of the randomized Kaczmarz algorithm, where we choose the node $i$ with probability $p(i)\propto ||a(i)||^2 = N(i)^2 + N(i)$, the expected number of computations per iteration is given by
$$\mathbb{E}[S] = \mathbb{E}[p(i)\cdot N(i)] + \mathbb{E}[1],$$
$$p(i)\cdot N(i) = \frac{N(i)^2(N(i)+1)}{\sum_j N(j)(N(j)+1)}$$
$$\Longrightarrow \mathbb{E}[p(i)\cdot N(i)] = \mathbb{E}\left[\frac{N(i)^2 + N(i)^3}{N(i) + N(i)^2}\right] = \mathbb{E}\left[N(i)\right].$$
For the special case when the underlying graph $\G$ is an Erd\H{o}s-R\'enyi graph with edge probability $p$, $\mathbb{E}[N(i)] = (N -1)p$ and hence $\mathbb{E}[S] = O(Np)$.

Let $k_\epsilon$ give us the number of iterations required to reach within an error $\epsilon$ of the solution. From \cite{Strohmer}, we can see that expected value of $k_\epsilon$ is given as:
$$\mathbb{E}[k_\epsilon] \leq \frac{2\log \epsilon}{\log (1 - \lambda^*_{min}/trace(A^TA))} \approx \frac{trace(A^TA)}{\lambda^*_{min}}\log \frac{1}{\epsilon}.$$
For our setup, the matrix $A$ is the Laplacian matrix $LL^T$ of the underlying comparison graph $\G$. For the special case when the underlying graph $\G$ is an Erd\H{o}s-R\'enyi graph with edge probability $p$, all the eigenvalues of the matrix $A^TA$ are $\Theta((Np)^2)$ \cite{mohar1993eigenvalues, hoffman2012spectral}, $\frac{trace(A^TA)}{\lambda^*_{min}} \approx N$ as $N$ grows large. Hence $$\mathbb{E}[k_\epsilon] = O(N)$$ and the total number of computations $T$ for our algorithm is given by
$$\mathbb{E}[T] = \mathbb{E}[S]\cdot \mathbb{E}[k_\epsilon] = O(N^2p).$$ 

\section{Error Analysis}
\label{Sec:ErrorAnalysis}
\ \\
In this section, we consider the error performance of our proposed scheme for ranking using pairwise comparisons and have the following main result:
\begin{theorem}
\label{Thm:Error}
Consider $N$ entities with associated weights $w_1, w_2, \ldots, w_N$ and a connected comparison graph $\G = (\V, \E)$ with $| \E | = M$. Let $C$ denote the total number of comparisons, such that each pair $(i, j) \in \E$ is compared $k = C / M$ times, with outcomes according to the BTL model~\eqref{Eqn:BTL}. Then for $k \ge \Omega(\log N)$, the normalized weight error of the proposed scheme using the randomized Kaczmarz algorithm is given by
 \begin{eqnarray*}
   \frac{ \|\widehat{w} -  w\|  }{\|w\|} 
   &\leq& O\left(\frac{Mb(1 + b)\sqrt{\lambda^L_{max}\log M}}{\sqrt{CN}\lambda^L_{min}}\right)
  \end{eqnarray*}
with high probability (w.h.p), where $\lambda_{max}^L, \lambda_{min}^L$ denote the maximum and minimum eigenvalues respectively of the Laplacian matrix for the comparison graph $\G$; and $[\underline{w}, \bar{w}]$ denotes the dynamic range of the $w_i$'s with $b = \bar{w} / \underline{w}$.
\end{theorem}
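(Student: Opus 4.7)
The plan is to decompose $\|\hat{w} - w\|/\|w\|$ into three chained error bounds: (i) a concentration bound on the empirical edge probabilities $\hat{p}_{ij}$ around $p_{ij}$; (ii) a spectral bound that propagates these edge-wise deviations through the noisy normal equations $LL^T \hat{v} = Ly$ to yield a bound on $\|\hat{v} - v\|$; and (iii) a log-to-weight conversion from $\hat{v} - v$ back to $\hat{w} - w$. Throughout, we identify the true $v$ with its representative in the affine subspace $H$ used by the Kaczmarz iteration (so that $\hat{v} - v \in H_0$), exploiting the fact that the formulation is shift-invariant in $v$ and hence scale-invariant in $w$.

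For step (i), I would apply Hoeffding's inequality to each of the $M$ empirical edge probabilities, each an average of $k = C/M$ i.i.d.\ Bernoullis with mean $p_{ij}$, and then union-bound over edges to obtain $|\hat{p}_{ij} - p_{ij}| = O(\sqrt{\log M / k})$ simultaneously w.h.p. The condition $k \ge \Omega(\log N) \ge \Omega(\log M)$ guarantees this deviation is strictly smaller than the distance of $p_{ij}$ from $\{0, 1\}$, since $p_{ij} \in [1/(1+b),\, b/(1+b)]$ by the BTL formula and the definition of $b$. For step (ii), the map $p \mapsto -\log(1/p - 1)$ has derivative $1/(p(1-p))$, bounded by $(1+b)^2/b = O(b(1+b))$ on that interval; a Lipschitz argument then gives $|y_{ij} - y'_{ij}| = O\!\left(b(1+b)\sqrt{\log M / k}\right)$ on every edge, and summing over the $M$ coordinates, $\|y - y'\| = O\!\left(b(1+b)\sqrt{M \log M / k}\right)$.

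From $L^T v = y'$ and $LL^T \hat{v} = Ly$ I obtain $LL^T (\hat{v} - v) = L(y - y')$; since $\hat{v} - v \in H_0$, the restriction of $LL^T$ to $H_0$ is invertible with smallest eigenvalue $\lambda_{min}^L$, and $\|L\|_2 = \sqrt{\lambda_{max}^L}$, yielding the crude but tight-enough bound $\|\hat{v} - v\| \le (\sqrt{\lambda_{max}^L}/\lambda_{min}^L)\, \|y - y'\|$. For step (iii), since $\hat{v}_i - v_i = \log(\hat{w}_i/w_i)$, I would use $|e^\Delta - 1| \le 2|\Delta|$ for $|\Delta| \le 1$ (whose hypothesis is verified once $k \ge \Omega(\log N)$) to get $|\hat{w}_i - w_i| \le 2 w_i |\hat{v}_i - v_i| \le 2 \bar{w} |\hat{v}_i - v_i|$, hence $\|\hat{w} - w\| \le 2\bar{w} \|\hat{v} - v\|$. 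Dividing by $\|w\| \ge \sqrt{N}\,\underline{w}$, using $b = \bar{w}/\underline{w}$, and substituting $k = C/M$ (so that $\sqrt{M/k} = M/\sqrt{C}$) assembles the three bounds into the claimed rate.

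The main technical obstacle is the bookkeeping around the rank deficiency of $LL^T$ and the associated normalization: $v$ is determined only up to a shift along $\mathbf{1}$ and $w$ correspondingly only up to a positive scalar, so the expression $\|\hat{w} - w\|/\|w\|$ is meaningful only after placing $\hat{v}$ and $v$ in the same affine subspace $H$ and verifying that the shift does not inflate the error beyond the stated bound. A secondary subtlety is ensuring that the Lipschitz bound in step (ii) applies to the random $\hat{p}_{ij}$ as well as the deterministic $p_{ij}$; this is precisely where the $k \ge \Omega(\log N)$ hypothesis is needed, guaranteeing that every $\hat{p}_{ij}$ remains in a compact sub-interval of $(0,1)$ on the good event.
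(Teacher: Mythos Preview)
Your decomposition is exactly the paper's: Hoeffding on $\hat p_{ij}$, a Lipschitz step to $y_{ij}$, spectral inversion through $LL^T$ (their Claims~1 and~2), and then the exponential map back to $w$. The only slip is in your Lipschitz constant: since $b=\bar w/\underline w\ge 1$ one has $(1+b)^2/b\le 2(1+b)$, so the edge-wise bound is $|y_{ij}-y'_{ij}|=O\big((1+b)\sqrt{\log M/k}\big)$, not $O\big(b(1+b)\sqrt{\log M/k}\big)$; carrying your looser constant through step~(iii) would produce $b^2(1+b)$ in the final estimate instead of the stated $b(1+b)$. The paper obtains the $O(1+b)$ constant by writing $y'_{ij}=\log p_{ij}-\log p_{ji}$ and bounding each logarithm separately via the mean value theorem, but your direct derivative bound works just as well once tightened.
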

\begin{proof}
The first part of the proof proceeds through a sequence of steps in order to characterize the error in estimating $v$ by solving the set of linear equations in \eqref{linear}. Throughout, we use the notation `$\Delta\cdots$' for error in `$\cdots$'.  

 \begin{itemize}

\item \textit{Claim 1: $\| \Delta v \| \le O(1 / \lambda_{min}^{L})\|\Delta Ly'\|$} 
\begin{proof}From \eqref{linear}, we compute an estimate $\hat{v} = v + \Delta v$ by solving a noisy version of the linear system $Ly' = LL^T v$, restricted to a translation of the orthogonal compliment of the null space of $LL^T$. Thus any error $\|\Delta Ly'\|$ in $Ly'$ will lead to an error of at most $O(1 / \lambda_{min}^{L})\|\Delta Ly'\|$ in our estimate of $v$, where $\lambda_{min}^L$ is the minimum non-zero eigenvalue of the Laplacian matrix $LL^T$ for the underlying graph $\G$.
\end{proof}
\item \textit{Claim 2: $\| \Delta Ly' \| \le O(\sqrt{\lambda^L_{max}}) \|\Delta y'\|$} 
\begin{proof}
Any error $\|\Delta y'\|$ in $y'$ will lead to an error of at most $O(\sqrt{\lambda^L_{max}}) \|\Delta y'\|$ in $\Delta Ly'$, where $\lambda_{max}^L$ is the maximum eigenvalue\footnote{This uses the fact that non-zero eigenvalues of $LL^T$ and $L^TL$ are identical.} of the Laplacian matrix $LL^T$ for the underlying graph $\G$.
\end{proof}
 \item \textit{Claim 3: For each $(i, j) \in \E$, \\
 $\Delta y'_{ij} \le$ $2(1+b)\left(|\Delta p_{ij}| + |\Delta p_{ji}|\right)$ w.h.p.} 
 \begin{proof}
 Recall that
 \begin{eqnarray*}
 y'_{ij} &=&-\log\left(\frac{1}{p_{ij}} - 1\right), \\
 y_{ij} = y'_{ij} + \Delta y'_{ij} &=& -\log\left(\frac{1}{\hat{p}_{ij}} - 1\right).
 \end{eqnarray*}
 Then we have
 $$
 y'_{ij} = \log(p_{ij}) - \log(p_{ji})
 $$
 and
\begin{eqnarray*}
y_{ij}
&=& \log\left(\frac{1}{k}\sum_lX^l_{ij}\right) - \log\left(\frac{1}{k}\sum_lX^l_{ji}\right) \\
&=& \log(p_{ij} + \Delta p_{ij}) - \log(p_{ji} + \Delta p_{ji}).
\end{eqnarray*}
By the mean value theorem, there exists $p^* \in (p_{ij} - | \Delta p_{ij} |, p_{ij} + | \Delta p_{ij} |)$ such that
\begin{align*}
\frac{ \log(p_{ij} + \Delta p_{ij}) - \log(p_{ij})  }{ \Delta p_{ij} } &=& \frac{d \log p}{dp}  \ \Big \rvert_{p = p^*} \\
&=& \frac{1}{p^*} \\
&\ge& \frac{1}{p_{ij} - | \Delta p_{ij} |}\\
&\overset{(a)}{\ge}& \frac{2}{p_{ij}}\\
&\ge& 2(1 + b)
\end{align*}
where $(a)$ follows w.h.p. from \eqref{Eqn:ProbEstError2} below and the last inequality holds since $p_{ij} \ge 1/(1+b)$. Thus we have
\begin{align*}
| \log(p_{ij} + \Delta p_{ij}) - \log(p_{ij})  | \le 2(1+b) | \Delta p_{ij} |.
\end{align*}
Similarly, we can show that
\begin{align*}
| \log(p_{ji} + \Delta p_{ji}) - \log(p_{ji})  | \le 2(1+b) | \Delta p_{ji} |.
\end{align*}
Combining the above inequalities, we have
$$
|\Delta y'_{ij}| \leq 2(1+b)\left(|\Delta p_{ij}| + |\Delta p_{ji}|\right).
$$
\end{proof}
\remove{
Next, we have
$$
\frac{d \log p}{dp}  \ \Big \rvert_{p = p_{ij}} = \frac{1}{p_{ij}} = \frac{w_i + w_j}{w_{i}} \le 1 + \frac{\bar{w}}{\underline{w}} = 1 + b,
$$
and hence by the mean value theorem, we have
$$
 \log(p_{ij} + \Delta p_{ij}) - \log(p_{ij}) \le
$$
$$|\Delta y'_{ij}| \leq \frac{1}{\underline{w}}(|\Delta p_{ij}| + |\Delta p_{ji}|).$$
}
 \item \textit{Claim 4: For $\eta > 0$ and each $(i, j) \in \E$, $P(|\Delta p_{ij}| \geq \eta) \leq 2e^{-2\eta^2k}$} 
 \begin{proof}
 We have $k$ comparisons between $i$ and $j$. Then from \eqref{Eqn:ProbEstimate}, we have an estimate $ \hat{p}_{ij} = (p_{ij} + \Delta p_{ij})$ of $p_{ij}$ based on these measurements, given by
 $$
\hat{p}_{ij} := \frac{\sum_lX^l_{ij}}{\sum_lX^l_{ij} + \sum_lX^l_{ji}} = \frac{\sum_lX^l_{ij}}{k}.
 $$
It follows from Hoeffding inequality that for any $\eta > 0$,
\begin{equation}
\label{Eqn:ProbEstError}
P(|\Delta p_{ij}| \geq \eta) \leq 2e^{-2\eta^2k} 
\end{equation}
 which proves the claim. In particular, if we set $\eta = p_{ij} / 2$ and $k = 6\log N / p_{ij}^2$, then we have
\begin{equation}
\label{Eqn:ProbEstError2}
|\Delta p_{ij}| \leq p_{ij} / 2 \ \mbox{with prob.} \ 1 - O(1 / N^3).
\end{equation}
Since $p_{ij} = w_i / (w_i + w_j) \ge 1/(1 + b)$ for all edges $(i, j)$ and there are at most $O(N^2)$ edges in the graph, $k \ge 2(1+b)^2\log N \ \forall (i, j)$ suffices for the above bound on $|\Delta p_{ij}|$ to hold true for all $(i, j) \in \E$ w.h.p. as $N$ grows large.
\end{proof}
\end{itemize}

Combining all the preceding claims, we then have that the total error in the estimate $\hat{v}$ is given by
    \begin{align*}
    &&\|\Delta v\| \leq O\left(\frac{\eta(1+b)\sqrt{\lambda^L_{max}M}}{\lambda^L_{min}}\right) \\
     &&\mbox{with probability} \ 1 - 2Me^{-2\eta^2C/M}.
     \end{align*}
Taking $\eta = \sqrt{M\log M/C}$, we have
    \begin{align*}
    &&\|\Delta v\| \leq O\left(\frac{M(1 + b)\sqrt{\lambda^L_{max}\log M}}{\sqrt{C}\lambda^L_{min}}\right)  \\
    &&\mbox{with probability} \ 1 - O(1/M).
    \end{align*}
    \noindent Then
$$(\hat{w}_i - w_i)^2 = w_i^2(\exp(\Delta v) - 1)^2 \approx w_i^2\cdot \Delta v^2$$
$$\Rightarrow ||\hat{w} - w|| \leq \bar{w} ||\Delta v||$$
$$\Rightarrow \frac{||\hat{w} - w||}{||w||} \leq \frac{\bar{w} ||\Delta v||}{\underline{w} \sqrt{N}}$$
Thus, as $N$ grows large, the above inequality holds w.h.p.. In that case, the normalized weight error is  given by

 \begin{eqnarray}
   \frac{ \|\Delta w\|  }{\|w\|} &\leq&  \frac{b\cdot \|\Delta v\|}{\sqrt{N}} \nonumber \\
   &\leq& O\left(\frac{Mb(1 + b)\sqrt{\lambda^L_{max}\log M}}{\sqrt{CN}\lambda^L_{min}}\right). \label{doubleu}
  \end{eqnarray}
  which completes the proof of Theorem~\ref{Thm:Error}.
\end{proof}
\remove{
Thus, in order to ensure that the normalized weight error is at most $\epsilon$, the total number of comparisons $C$ needed is given by
\begin{equation}
\label{Eqn:Comparisons}
C \ge \Omega\left(\frac{M^2 b^2 (1 + b)^2\lambda^L_{max}\log M}{N(\lambda^L_{min})^2}\right).
\end{equation}
}
A couple of comments are in order.
\begin{enumerate}
\item In the calculations above, we have not accounted for the additional error due to the finite run of the randomized Kaczmarz scheme. From Theorem 1, we have an exponential bound on the mean square error caused thereby. Specifically, after $n$ iterations of the randomized Kaczmarz scheme, the additional mean square error is
$O\left( \alpha^n \right)$ for some $\alpha \in (0, 1)$, which converges to zero exponentially fast.

\item The estimate $\hat{p}_{ij}$ of $p_{ij}$ is based on the strong law of large numbers and is unbiased. But that is not so for the estimate $y_{ij}$ of $y'_{ij}$ because of the intervening nonlinear transformations. Nevertheless, since our problem of ranking is an ordinal problem that is insensitive to sufficiently small errors, the foregoing ensures correct ranking with a very high probability if sufficiently many samples are used for estimating the probabilities and then the randomized Kaczmarz is run for sufficiently long.\\ 

\item Consider the special case when the comparison graph $\G$ is the Erd\H{o}s-R\'enyi graph, so that for each pair of nodes, the edge between them exists with some probability $p$. For $p \ge \Omega(\log N / N)$, which is the minimum needed to ensure that the graph is connected as the size of the graph $N$ grows large, we have the number of edges $M = \Theta(N^2p)$, and  both $\lambda^L_{max}$ and $\lambda^L_{min}$ are $\Theta(Np)$ \cite{mohar1993eigenvalues, hoffman2012spectral}. From Thoerem~\ref{Thm:Error} and $C = Mk$, we have 
\begin{eqnarray}
\nonumber
 \frac{ \|\Delta w\|  }{\|w\|} &\leq& O\left(\frac{N^2pb(1 + b)\sqrt{Np \log (N^2p)}}{\sqrt{kN^2p \cdot N}\cdot Np}\right) \\
 &\leq& O\left(\frac{b(1 + b)\sqrt{\log N}}{\sqrt{k}}\right)
 \label{Eqn:ERerror}
\end{eqnarray}
Thus, in order to ensure that the normalized weight error is at most some constant $\epsilon > 0$, we need $k \ge \Omega(\log N)$. When $p = \Theta(\log N / N)$, the number of edges $M$ is $\Theta(N\log N)$ w.h.p. as $N$ grows large, and hence the total number of comparisons needed is $C \ge \Omega\left(N\log^2N\right)$. The minimum number of edges needed to ensure w.h.p. that the Erd\H{o}s-R\'enyi graph is connected is $\Omega(N\log N)$, so the above requirement on $C$ is optimal upto logarithmic factors. This is similar to the result obtained in \cite{negahban2012rank}.
\remove{
\item For the Erdos-Renyi model, we can also estimate the dependence on the average degree $d = Np$ for the above set up, i.e., with $p \approx \log N / N$.  Recall also that we required $m_{ij} \geq 2(1 + b)\log N$, leading to $C = \Theta\left(N\log N \cdot \log N\right)$. 
    Combining the estimates with (\ref{doubleu}), we obtain
    \begin{eqnarray*}
    \frac{\|\Delta w\|}{\|w\|} &\leq& O\left(\frac{M\sqrt{\lambda^L_{max}\log M}}{\sqrt{CN}\lambda^L_{min}}\right) \\
    &=& O\left(\frac{N\log N\sqrt{\log N \cdot \log N}}{\sqrt{N\log N\cdot \log N \cdot N}\log N}\right)  \\
    &=&  O(1),
    \end{eqnarray*}
i.e., the error remains bounded. 
}


\end{enumerate}

\section{Remarks and Extensions}
\ \\
We sketch here several important variants and extensions, along with some general remarks.

\begin{enumerate}
\remove{
\item \textit{\large A `stochastic gradient' formulation:}\\

 Define
\begin{eqnarray*}
D(n) &:=& \ diag\left(I\{\xi(n) = 1\}, \cdots, I\{\xi(n) = d\}\right), \\
\bar{D} &:=& \ diag\left(p_1, \cdots, p_d\right) = E\left[D(n)\right], \\
\tilde{b}_i &:=&\frac{\sqrt{p_i}b_i}{\|a_i\|}, \ \tilde{b} := [\tilde{b}_1, \cdots, \tilde{b}_d]^T, \\
\tilde{a}_i &:=& \left(\frac{\sqrt{p_i}}{\|a_i\|}\right)a_i, \\
\tilde{A} &=& \ \mbox{matrix whose $i$th row is} \ \tilde{a}_i \ \forall i, \\
M(n+1) &:=& A^T(D(n) - \bar{D})(b - Ax(n)),
\end{eqnarray*}
 Then $\{M(n)\}$ is a martingale difference sequence and (\ref{eqn-i}) can be written in a vector form as
\begin{eqnarray*}
x(n+1) &=& x(n) + \left[\left(\tilde{A}^T(\tilde{b} - \tilde{A}x)\right) + M(n+1)\right] \\
&=& x(n) - \left[\nabla\left(\frac{1}{2}\|\tilde{A}x - \tilde{b}\|^2\right) - M(n+1)\right].
\end{eqnarray*}
This gives an alternative interpretation for randomized Kaczmarz algorithm as a stochastic gradient scheme with constant step size $\equiv 1$.  \\
}
\item \textit{\large Optimal sampling distribution:}\\

 Let
$$F(p) := \sum_ip_i|\langle\check{a}_i, z^*(p)\rangle|^2.$$
We can  define the optimal sampling distribution as
$$
p^* = [p^*_1, \cdots, p^*_N] \in \mbox{Argmax}_pF(p).
$$
This leads to the problem of evaluating the outer maximizer of
$$
\max_p\min_{\{z : \|z\| = 1, z \in H_0\}}\sum_ip_i|\langle\check{a}_i, z\rangle|^2.
$$
Note that the problem is not amenable to the Von Neumann - Ky Fan minmax theorem because the inner minimization is over a sphere, a non-convex set.\\

\item \textit{\large Exactly determined system:}\\

 As already observed, $v$ can be specified only up to an additive scalar, since its pairwise differences is the only thing that counts. Thus we may set one component of $v$, say $v_{i_0}$, equal to zero, which is tantamount to dropping the corresponding row and column of $L$ from consideration.  This modification renders $L, LL^T$ full rank. We also experimented with the randomized Kaczmarz corresponding to this exactly determined system, but the performance was not as good as the underdetermined system. \\

\item \textit{\large Comparison with Clock Syncronization:}\\

It is also worth noting that the equations we have are exactly the same as those arising in clock synchronization where similar issues arise \cite{Solis}. The algorithm proposed in \cite{Solis} is another alternative scheme which is quite similar to ours. Our experimentation, however, indicated that the present randomized Kaczmarz scheme has a superior performance.\\

\item \textit{\large Ranking based on insufficient data:}\\

This corresponds to the case when we have data only on a small subset of edges, so that $L$ does not correspond to a connected graph provided in advance but only to a subset of its edges. The Kaczmarz algorithm works nevertheless in view of our analysis above, the only difference being that $H$ is now a higher dimensional space. The iterates then converge a.s.\ to an initial condition dependent point in $H$ as proved above.\\

\item \textit{\large On-line distributed scheme:}\\

Suppose that the user preference data is episodic and we correspondingly keep running estimates of $\{p_{ij}\}$, updating each when a new observation relevant to the particular estimate appears. The randomized Kaczmarz scheme keeps running in the background on its own clock. At each time $n$, we use the most recent estimates $\{\hat{p}_{ij}(n)\}$. Then $\hat{p}_{ij}(n) \to p_{ij} \ \forall \ i,j,$ as $n\uparrow\infty$. We can mimic our earlier analysis to obtain an arror bound
 $$E\left[\|x(n+1) - x^*\|^2\right] \leq \alpha E\left[\|x(n) - x^*\|^2\right] + \epsilon(n)$$
 where $\epsilon(n)$ is an asymptotically vanishing error variance term. This captures the combined effect of the quantities $\{$var$(\hat{p}_{ij}(n))\}$. Iterating, we see that we have
 $$E\left[\|x(n) - x^*\|^2\right] = O(\alpha^2 + \epsilon(n)) \to 0.$$\\

\item \textit{\large Tracking slowly varying rankings:}\\

 Our scheme can be modified to address the situation when the rankings drift slowly over time and the aim is to track them. We resort to the stochastic approximation version of the Kaczmarz method \cite{Thoppe}. Suppose a new observation is received for pair $(i,j)$ (assuming it is the one retained in our calculations, not $(j,i)$), then update $\hat{p}_{ij}(n)$ by a running average and concurrently run the constant step size asynchronous stochastic approximation scheme (with $m :=$ the lexicographical position of $(i,j)$ in our ordering)
\begin{eqnarray}
\lefteqn{x(n+1) =} \nonumber \\
&&\hspace{-.2in}(1 - cI\{\xi_n = m\})x(n) + cI\{\xi_n = m\}\times \frac{1}{\|a_m\|^2}\times \nonumber \\
\nonumber
&&\hspace{-.2in}\left(\sum_{\{j : (i,j) \in \mathcal{E}\}}( (y_{ij}(n) - y_{ji}(n)) - (x_i(n) - x_j(n)) )\right)a_i^T, \\ \label{Kac2}
\end{eqnarray}
where
$$y_{ij}(n) := - \log\left(\frac{1}{ \hat{p}_{ij}(n) } - 1 \right)$$
and $c > 0$ is a small\footnote{This should be small, but not so small that the algorithmic time scale given by $t(n) = nc$ is no faster than the time scale on which the environment changes, in which case the algorithm loses its tracking ability.} constant step size. There is, however, one subtlety. Earlier $\xi_n$ stood for the component we \textit{chose} to update, hence we could ensure that all components are sampled with a prescribed positive relative frequency. Now it is the component the environment chose to provide us data on. Thus we need to make assumptions regarding its statistics. One very general and convenient assumption is that the fraction of times any particular component was updated till time $n$ remains bounded away from zero with probability one as $n\uparrow\infty$. Then analysis similar to \cite{Thoppe} is possible, leading to the conclusion that the algorithm tracks the correct rankings with an error that is $O(c)$. This is what we expect from the theory of constant stepsize stochastic approximation, see \cite[Chapter 9]{borkar:book}.\\

Stochastic approximation is an incremental algorithm which uses decreasing step size to suppress the effect of discretization errors, noise and communication delays. It is unwarranted for our original set up because we have convergence even without incrementality which can only slow it down. In fact our experimentation did show degradation in speed of the original scheme (\ref{Kac}) when it was replaced by (\ref{Kac2}).\\
\end{enumerate}

\section{Experimental Results}
\ \\
In Section~\ref{Sec:ErrorAnalysis}, we studied the performance of our algorithm with respect to the normalized weight error $(\|w-\hat{w}\|)/\|w\|$. Since we are primarily concerned with the ranking and the error therein, we define the following error metric:
$$D_w(\sigma) = \sqrt{\frac{1}{2n\|w\|^2}\sum_{i<j}(w_i - w_j)^2\mathbb{I}((w_i - w_j)(\sigma_i - \sigma_j) > 0)}$$ where $\mathbb{I}(\cdot)$ is the indicator function, $w_i$'s are the actual weights of the players in the BTL model, and $\sigma$ the ordering according to the estimated weights. This error metric considers pairs of items and penalizes errors in their ordering, in proportion to the difference in their weights. Thus, the penalty is smaller if we get an error in ranking two players with similar weights, as compared to when they are vastly different. This error metric was also used in \cite{negahban2012rank} to evaluate the performance of their proposed ranking algorithm. Furthermore, \cite{negahban2012rank} showed that if $\hat{w}$ is the estimated weight vector used for the ordering $\sigma$, then
\begin{eqnarray}
\nonumber
D_w(\sigma) &\leq& \frac{\|w-\hat{w}\|}{\|w\|}.
\label{Eqn:DRank}
\end{eqnarray}

\textbf{Data Generation}: We consider $N = 400$ items and assign a weight $w_i$ to each item $i$ as $w_i = 10^{i/n}$. Thus, the dynamic range for the weights $b = \bar{w} / \underline{w} = 10$. For the underlying comparison graph $\G$, we assume an Erd\H{o}s-R\'enyi graph, so that for each pair of nodes, the edge between them exists with some probability $p$. Finally, we will denote the number of comparisons made per edge by $k$ and the total number of comparisons in $\G$ by $C$.

For each comparison, we randomly generate the output according to \eqref{Eqn:BTL}.  After collecting the outputs for all the comparisons, we run our proposed iterative algorithm, as described in Section~\ref{Sec:Problem}, and output the predicted weight vector upon convergence. We average our results over a large number of experiments and present the results below.

\subsection{Error Performance}

We compare the performance of the proposed Randomized Kaczmarz estimator with the Rank Centrality estimator from \cite{negahban2012rank} and the Maximum Likelihood Estimator for the BTL model from \cite{fordMLE}. We consider two error metrics, the normalized weight error $(\|w-\hat{w}\|)/\|w\|$ and $D_w(\sigma)$, as defined in \eqref{Eqn:DRank}. Figures~\ref{fig:Kvariation400Norm} and \ref{fig:Kvariation400D_w} illustrate the performance of the various algorithms in terms of the normalized weight error $(\|w-\hat{w}\|)/\|w\|$ and $D_w(\sigma)$ respectively, as a function of the number of comparisons per edge $k$ for a fixed value of edge probability $p \in \{0.16, 0.32\}$. The normalized weight error decays as $k^{-0.5}$, as expected from \eqref{Eqn:ERerror} in the error analysis section. In terms of $D_w(\sigma)$, which reflects the error in ranking the items, all the three estimators demonstrate very similar performance. 

Similarly, Figures~\ref{fig:Pvariation400Norm} and \ref{fig:Pvariation400D_w} show the dependence of these error metrics on the edge probability $p$, while fixing the number of comparisons per edge $k \in \{30, 100\}$. We can see that the error in the ordering matches almost exactly as that of the Rank Centrality, but the normalized weight error is a bit higher for higher edge probabilities. Hence, this shows that in terms of ordering, we perform as well as Rank Centrality (which in turn is as good as the Maximum Likelihood Estimator). Also we can see some dependence of the error on the edge probability too. However, some of the bounds used in the error analysis in Section~\ref{Sec:ErrorAnalysis} are too generous and hence the error bound in \eqref{Eqn:ERerror} fails to capture this dependence.

\begin{figure}[!t]
\centering
\begin{tikzpicture}
\begin{loglogaxis}[height=8cm,width=8cm,
	xlabel={Number of comparisons per edge $k$},
	ylabel={Normalised Weight Error}
]
\addplot+[black, mark = triangle*, smooth] coordinates {
(10,0.0873133)
(30,0.0615486)
(100,0.0298083)
(300,0.018745)
(1000,0.009849)
};

\addplot+[black, mark =triangle, smooth] coordinates {
(10,0.090072)
(30,0.0522295)
(100,0.0294848)
(300,0.0180182)
(1000,0.00989946)
};

\addplot+[black, mark =square, smooth] coordinates {
(10,0.089138)
(30,0.0522118)
(100,0.0288911)
(300,0.0179101)
(1000,0.00987114)
};

\addplot+[black, mark = *, smooth] coordinates {
(10,0.0703991)
(30,0.050109)
(100,0.0236533)
(300,0.0156695)
(1000,0.00817693)
};

\addplot+[black, mark =o, smooth] coordinates {
(10,0.0740889)
(30,0.0412814)
(100,0.022678)
(300,0.014855)
(1000,0.00805312)
};

\addplot+[black, mark =square*, smooth] coordinates {
(10,0.0738819)
(30,0.041261)
(100,0.0224991)
(300,0.0147814)
(1000,0.007949913)
};
\legend{\tiny{RK p = 0.16}, \tiny{RC p = 0.16}, \tiny{MLE p = 0.16}, \tiny{RK p = 0.32}, \tiny{RC p = 0.32}, \tiny{MLE p = 0.32}}\end{loglogaxis}
\end{tikzpicture}
\caption{Normalized Weight Errors in Randomized Kaczmarz (RK), Rank Centrality (RC) and Maximum Likelihood Estimator (MLE) for various number of comparisons per edge $k$ for a constant edge probability $p \in \{0.16, 0.32\}$.}
\label{fig:Kvariation400Norm}
\end{figure}
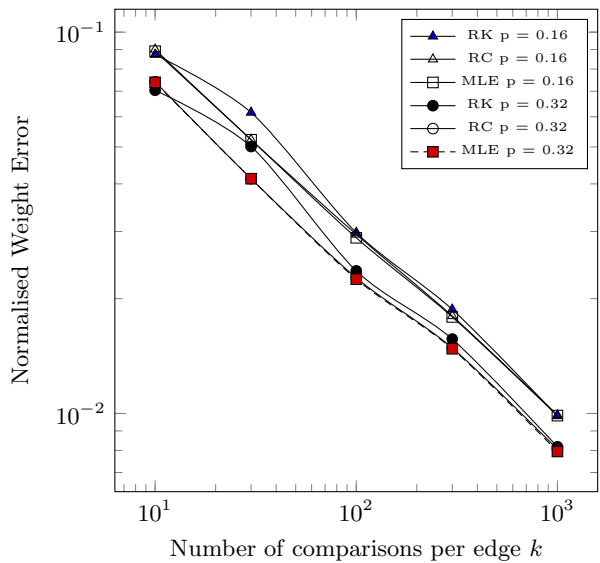

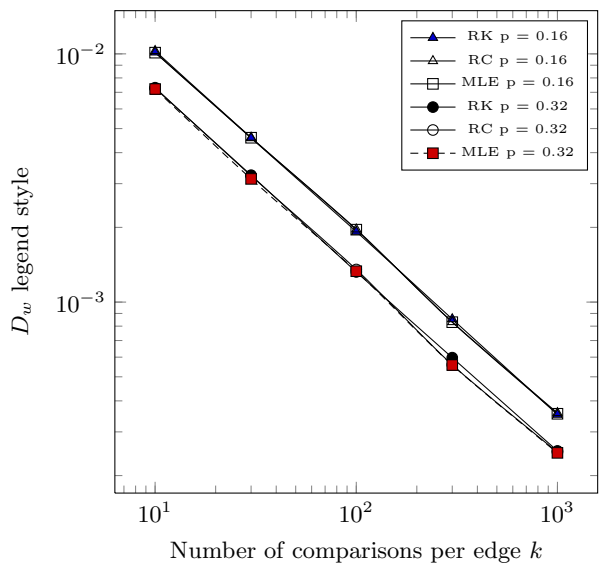
\begin{figure}[!b]
\centering
\begin{tikzpicture}
\begin{loglogaxis}[height=8cm,width=8cm,
	xlabel={Number of comparisons per edge $k$},
	ylabel={$D_w$}
	legend style={font=\small}
]
\addplot+[black, mark = triangle*, smooth] coordinates {
(10,0.010143)
(30,0.00458079)
(100,0.0019234)
(300,0.000854482)
(1000,0.000352248)
};

\addplot+[black, mark =triangle, smooth] coordinates {
(10,0.0102826)
(30,0.00460183)
(100,0.00196516)
(300,0.000831703)
(1000,0.000357676)
};

\addplot+[black, mark =square, smooth] coordinates {
(10,0.0101132)
(30,0.00459363)
(100,0.00195811)
(300,0.000829982)
(1000,0.000354899)
};

\addplot+[black, mark = *, smooth] coordinates {
(10,0.00726292)
(30,0.00324581)
(100,0.0013227)
(300,0.00059736)
(1000,0.00025143)
};

\addplot+[black, mark =o, smooth] coordinates {
(10,0.00730796)
(30,0.00322688)
(100,0.00135329)
(300,0.000559259)
(1000,0.000248729)
};

\addplot+[black, mark =square*, smooth] coordinates {
(10,0.00721912)
(30,0.003134956)
(100,0.00133192)
(300,0.000557718)
(1000,0.000246911)
};
\legend{\tiny{RK p = 0.16}, \tiny{RC p = 0.16}, \tiny{MLE p = 0.16}, \tiny{RK p = 0.32}, \tiny{RC p = 0.32}, \tiny{MLE p = 0.32}}
\end{loglogaxis}
\end{tikzpicture}
\caption{$D_w$ in Randomized Kaczmarz (RK), Rank Centrality (RC) and Maximum Likelihood Estimator (MLE) for various number of comparisons per edge $k$ for a constant edge probability $p \in \{0.16, 0.32\}$.}
\label{fig:Kvariation400D_w}
\end{figure}

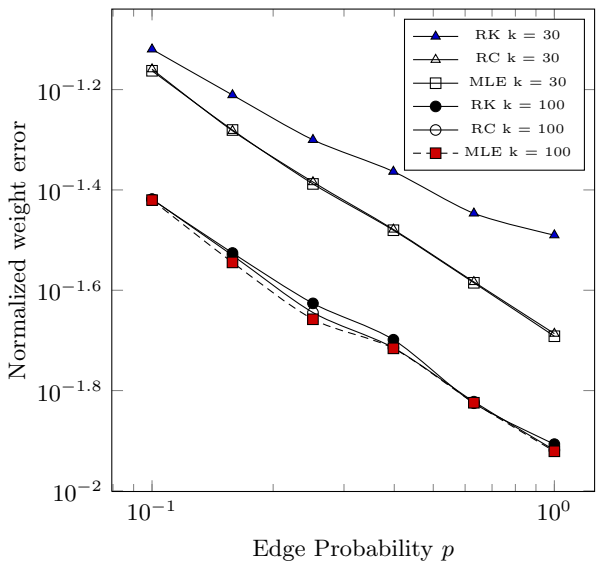
\begin{figure}[!t]
\centering
\begin{tikzpicture}
\begin{loglogaxis}[height=8cm,width=8cm,
	xlabel={Edge Probability $p$},
	ylabel={Normalized weight error}
]
\addplot+[black, mark = triangle*, smooth] coordinates {
(0.1,0.0759277)
(0.158489,0.0615486)
(0.251189,0.050109)
(0.398107,0.0433121)
(0.630957,0.0357569)
(1,0.0323538)
};

\addplot+[black, mark =triangle, smooth] coordinates {
(0.1,0.0693166)
(0.158489,0.0522295)
(0.251189,0.0412814)
(0.398107,0.0332082)
(0.630957,0.0261025)
(1,0.020587)
};

\addplot+[black, mark =square, smooth] coordinates {
(0.1,0.06879346)
(0.158489,0.0524165)
(0.251189,0.0409234)
(0.398107,0.0330912)
(0.630957,0.025997)
(1,0.0203417)
};

\addplot+[black, mark = *, smooth] coordinates {
(0.1,0.0381286)
(0.158489,0.0298083)
(0.251189,0.0236533)
(0.398107,0.0200189)
(0.630957,0.0149554)
(1,0.0123971)
};

\addplot+[black, mark =o, smooth] coordinates {
(0.1,0.0381728)
(0.158489,0.0294848)
(0.251189,0.022678)
(0.398107,0.0192915)
(0.630957,0.0150652)
(1,0.0120641)
};

\addplot+[black, mark =square*, smooth] coordinates {
(0.1,0.0380021)
(0.158489,0.0285446)
(0.251189,0.021989)
(0.398107,0.0192415)
(0.630957,0.0149892)
(1,0.0119951)
};
\legend{\tiny{RK k = 30}, \tiny{RC k = 30}, \tiny{MLE k = 30}, \tiny{RK k = 100}, \tiny{RC k = 100}, \tiny{MLE k = 100}}
\end{loglogaxis}
\end{tikzpicture}
\caption{Normalized Weight Error in Randomized Kaczmarz (RK), Rank Centrality (RC) and Maximum Likelihood Estimator (MLE) for various edge probability $p$ for a constant number of comparisons per edge $k \in \{30, 100\}$.}
\label{fig:Pvariation400Norm}
\end{figure}

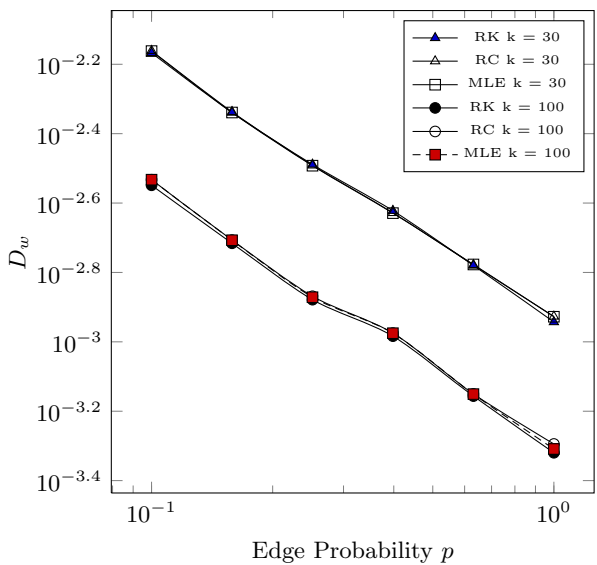
\begin{figure}[!b]
\centering
\begin{tikzpicture}
\begin{loglogaxis}[height=8cm,width=8cm,
	xlabel={Edge Probability $p$},
	ylabel={$D_w$}
]
\addplot+[black, mark = triangle*, smooth] coordinates {
(0.1,0.00680586)
(0.158489,0.00458079)
(0.251189,0.00324581)
(0.398107,0.00238729)
(0.630957,0.00166071)
(1,0.00114056)
};

\addplot+[black, mark =triangle, smooth] coordinates {
(0.1,0.00690162)
(0.158489,0.00460183)
(0.251189,0.00322688)
(0.398107,0.00235338)
(0.630957,0.0016663)
(1,0.00118349)
};
\addplot+[black, mark =square, smooth] coordinates {
(0.1,0.00688756)
(0.158489,0.00458130)
(0.251189,0.00321889)
(0.398107,0.00234975)
(0.630957,0.0016711)
(1,0.00118117)
};

\addplot+[black, mark = *, smooth] coordinates {
(0.1,0.0028261)
(0.158489,0.0019234)
(0.251189,0.0013227)
(0.398107,0.0010383)
(0.630957,0.000697524)
(1,0.00047892)
};

\addplot+[black, mark =o, smooth] coordinates {
(0.1,0.0029336)
(0.158489,0.00196516)
(0.251189,0.00135329)
(0.398107,0.00105998)
(0.630957,0.000707824)
(1,0.000507785)
};

\addplot+[black, mark =square*, smooth] coordinates {
(0.1,0.0029336)
(0.158489,0.00196327)
(0.251189,0.001344819)
(0.398107,0.00105921)
(0.630957,0.000707118)
(1,0.00049113)
};
\legend{\tiny{RK k = 30}, \tiny{RC k = 30}, \tiny{MLE k = 30}, \tiny{RK k = 100}, \tiny{RC k = 100}, \tiny{MLE k = 100}}
\end{loglogaxis}
\end{tikzpicture}
\caption{$D_w$ in Randomized Kaczmarz (RK), Rank Centrality (RC) and Maximum Likelihood Estimator (MLE) for various edge probability $p$ for a constant number of comparisons per edge $k \in \{30, 100\}$.}
\label{fig:Pvariation400D_w}
\end{figure}


\subsection{Stopping criterion and related issues}

\begin{enumerate}
\item \textit{\large Stopping Criterion:}

Define $\hat{w}_{n}$ to be the estimate of the weight vector after $n$ iterations.
We continue iterating till there is a very small change in the estimated weight vector over say $i$ iterations:
\begin{equation}
\label{Eqn:Stopping}
\frac{\|\hat{w}_{n+i} - \hat{w}_n\|}{\|\hat{w}_n\|} \leq \epsilon .
\end{equation}
Let the number of iteration required be denoted by $I_1(i,\epsilon)$. Figure~\ref{fig:stoppingVariation} shows the variation in $I_1(500,10^{-7})$ with different values of the edge probability $p$. The time complexity for the computation of this stopping criterion is $O(N)$ for each iteration, since we only need to calculate the norm.

\item \textit{\large Warm Start:}

One more possibility for speeding up the iterations is to initialize the iterative algorithm appropriately. We choose a reference node $i_{ref}$ (a good choice would be the distance centre of the graph) and assign the initial estimate for the reference node $v_{ref}^0 = 0$ . If the graph is connected, then there exists a path between each node and the reference node. Go along the shortest path for each node to assign a value $v_i^0 = \sum y_{ij}$ summed over the path, and use this rather than the zero vector as the initial condition. Let $I_2^k(i,\epsilon)$ be the number of iterations required to satisfy the criterion in \eqref{Eqn:Stopping}. Figure~\ref{fig:stoppingVariation} plots $I_2^{30}(500,10^{-7})$ vs the edge probability $p$ and also provides a comparison with $I_1(500,10^{-7})$. We can see that this choice of initial values helps reduce the number of iterations needed. The reduction in the number of iterations will be higher for higher number of comparisons, as we would be closer to the solution as $k$ increases. But there will be an initial computational cost of assigning these values before the iteration starts. Since this is similar to performing a Breadth-First search, the worst case time complexity for this pre-processing step is $O(|E|) = O(N^2p)$.

\item \textit{\large Convergence of $D_w(\sigma)$:}

As discussed before, since we are primarily interested in ranking items, the  $D_w(\sigma)$ error metric is more relevant than the normalized weight error. Here, we will calculate $D_w(\sigma)$ after each iteration and run iterations till it has converged. If this error metric has converged, then further iterations can only yield a better estimate of the weights, but the ranking will stay nearly the same. Let $D_w(\sigma^n)$ be the error in the ordering $\sigma^n$ after $n$ iterations. Then the iteration number $I_3(i,\epsilon) = \min\{n : D_w(\sigma^{n+i}) - D_w(\sigma^n) \leq \epsilon\}$. See Figure~\ref{fig:stoppingVariation} for a plot of $I_3(500, 10^{-7})$ and note that it is significantly smaller as compared to $I_1$ and $I_2$ which were based on the normalized weight error.

Note that to calculate $D_w(\sigma)$, we need to know the true ranking and for our experiments, we assume that to be true. The main goal of this experiment was to underscore the fact that convergence of ranks is much faster than the convergence of weight estimates. 

\item \textit{\large Top $K$ in $M$:}

Often it is not necessary to get the complete ranking correctly and it suffices to have the true $K$ top-ranked items to be among the estimated top $M$ items. Here we run our proposed algorithm with this as the stopping criterion. Again the knowledge of ground truth is necessary in this result, but it is an indicator of how fast this criterion is satisfied using this algorithm. Let $I_{4}(20,50)$ and $I_{4}(30, 75)$ denote the number of iterations needed to satisfy the stopping criteria top $20$ in top $50$ and top $30$ in top $75$ respectively, see Figure~\ref{fig:stoppingVariation} for an illustration. We can see that the top $20$ in $50$ criterion requires more iterations than top $30$ in $75$, which is expected as the former is a stricter criterion. Also, the top $K$ in $M$ criterion is achieved much faster than the other criterions. Further, the gap is the largest for high values of the edge probability $p$. This is because higher $p$ implies more edges in the network which in turn results in more weights being updated per iteration and thus the top $K$ start falling into the top $M$ sooner. As before, to implement such a stopping criterion we would need to know the true ranking and the main goal of the experiment was to demonstrate the faster convergence of ranks as opposed to weight estimates.


\end{enumerate}
Figure~\ref{fig:stoppingVariation} compares the number of iterations needed with the various stopping criteria discussed above. We have plots for the same synthesized data for: $I_1$, $I_2^{30}$, $I_3$, $I_{4}(20,50)$, and $I_{4}(30, 75)$.

\begin{figure}[!t]
\centering
\begin{tikzpicture}
\begin{axis}[height=8cm,width=8.5cm,
	xlabel={Probability},
	ylabel={Number of Iterations},
	cycle list name=black white
]
\addplot+[black, mark = o, smooth] coordinates {
(0.1,7387.1)
(0.158489,6154.1)
(0.251189,5802.7)
(0.398107,5268.5)
(0.630957,4759)
};

\addplot+[black, mark = triangle, smooth] coordinates {
(0.1,6777.8)
(0.158489,5947.8)
(0.251189,5188.8)
(0.398107,4803.7)
(0.630957,4153.7)
};
\addplot+[black, mark = x, smooth] coordinates {
(0.1,5651.1)
(0.158489,5454.3)
(0.251189,4976.4)
(0.398107,4417.2)
(0.630957,4082.6)
 };

\addplot+[black, mark = triangle*, smooth] coordinates {
(0.1,4070)
(0.158489,3382)
(0.251189,3182)
(0.398107,2578)
(0.630957,2225)
};

\addplot+[black, mark = *, smooth] coordinates {
(0.1,3206)
(0.158489,2345)
(0.251189,2245)
(0.398107,1796)
(0.630957,1847)
};

\legend{$I_1$, $I_2^{30}$, $I_3$, $I_{4}^{20/50}$, $I_{4}^{30/75}$}
\end{axis}
\end{tikzpicture}
\caption{Number of iterations required for the same data set by different stopping criteria.}
\label{fig:stoppingVariation}
\end{figure}
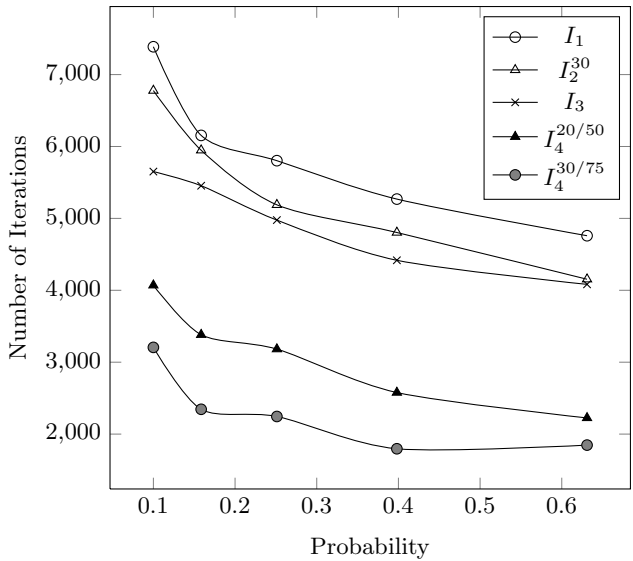

\begin{table*}[t]
\centering
\caption{Rank Aggregation for Tennis Players based on Pairwise Comparisons}
\label{table:tennis}
\begin{tabular}{@{}cccccccccc@{}}
\toprule
\multirow{2}{*}{ATP Rank} & \multirow{2}{*}{Name} & \multirow{2}{*}{Degree} & \multirow{2}{*}{Win Ratio} & \multicolumn{2}{c}{$\epsilon = 1$, RC} & \multicolumn{2}{c}{$\epsilon = 1$, RK} & \multicolumn{2}{c}{$\lambda = 0.05$, MLE} \\ \cmidrule(l){5-10} 
                          &                       &                         &                            & $w_i$              & Rank              & $w_i$              & Rank              & $w_i$                & Rank               \\ \midrule
1                         & N. Djokovic           & 88                      & 4.37                       & 2.09               & 2                 & 2.18               & 2                 & 2.13                 & 2                  \\
2                         & A. Murray             & 94                      & 3.03                       & 1.73               & 4                 & 1.79               & 4                 & 1.58                 & 4                  \\
3                         & R. Federer            & 88                      & 4.51                       & 2.15               & 1                 & 2.30               & 1                 & 2.07                 & 3                  \\
4                         & S. Wawrinka           & 96                      & 1.64                       & 1.14               & 9                 & 1.18               & 9                 & 1.04                 & 8                  \\
5                         & R. Nadal              & 96                      & 4.73                       & 1.95               & 3                 & 1.99               & 3                 & 2.15                 & 1                  \\
                                 \vdots                                   &                       \vdots&                         \vdots&                                                                      \vdots&                    \vdots&                   \vdots&                    \vdots&                   \vdots&                      \vdots&                    \vdots\\
150                       & B. Kavcic             & 103                     & 0.88                       & 0.56               & 87                & 0.56               & 85                & 0.58                 & 69                 \\
151                       & K. Khachanov          & 28                      & 0.35                       & 0.37               & 152               & 0.36               & 152               & 0.54                 & 110                \\
152                       & J. Nieminen           & 100                     & 0.81                       & 0.65               & 48                & 0.66               & 46                & 0.61                 & 52                 \\
153                       & J. Melzer             & 102                     & 0.99                       & 0.73               & 30                & 0.74               & 29                & 0.68                 & 30                 \\
154                       & J. Thompson           & 26                      & 0.35                       & 0.36               & 153               & 0.35               & 153               & 0.55                 & 108                \\ \bottomrule
\end{tabular}
\end{table*}

\subsection{Performance on real dataset}

We wanted to evaluate the performance of our proposed algorithm on real data with some head to head statistics where all players have not played each other. Professional lawn tennis seemed to be a good option. We collected the data of all the head to head matches of $154$ players as available in November 2015. We have a connectivity of 0.54 in the observed data with an average $2.5$ comparison per edge. Since there are pairs $i, j$ such that $i$ has won all its matches against $j$, if we used the fraction of wins as our probability estimate, as done in \eqref{Eqn:ProbEstimate}, it would yield $y_{ij} = \infty$ and $y_{ji} = -\infty$ for which our iterative algorithm would not work. Hence, a regularization was necessary and we redefine the equation for $\hat{p}_{ij}$ as: 
$$\hat{p}_{ij} = \frac{\sum X_{ij} + \epsilon}{\sum X_{ij} +\sum X_{ji} + 2\cdot \epsilon}$$
for some $\epsilon > 0$. Similarly, we also use a regularized version of the MLE which adds a penalty term of the form $\frac{1}{2}\lambda||\theta||^2$ to the objective function of the corresponding convex optimization problem, see \cite{negahban2012rank} for details.  Setting $\epsilon = 1, \lambda = 0.05$ and running the proposed algorithm, we get Table \ref{table:tennis}. We can see that if we had ranked the players by solely using the winning ratio, Rafael Nadal would have been ranked first. However, our algorithm also puts weight on the rank of the beaten opponent and this enables Roger Federer to grab the top position. Since we have taken all the played matches into account, there are differences with the current ATP rankings which only take recent performance into account. For example, the players who have performed well overall but not as good in the recent past like Jarkko Nieminen and Jurgen Melzer are ranked much higher then their current ATP rankings.

\section{Conclusions}
\ \\
We have considered the problem of rank aggregation of entities associated with the nodes of a connected graph when pairwise comparisons for neighboring nodes are available. Using the Bradley-Terry-Luce model, we associate preference probabilities in terms of certain node weights which are then to be estimated in order to come up with the overall ranking. Using a simple transformation, this is reduced to the problem of solving an underdetermined system of linear equations. We use the randomized Kaczmarz scheme for the purpose, which has provable convergence and exponential decay of mean square error, and in addition shows excellent performance in examples. We also discussed several variations, notably an online version. Further, we observed empirically that the rank order converges much faster than the weights themselves. Also, if one settles for the softer criterion of `top $K$ in top $M$' for prescribed $M > K$, again the convergence is very fast.\\

One of the future directions is to consider \textit{choosing} edges of the graph to sample comparative preferences on subject to a suitable cost of sampling, as also to come up with effective schemes when the sampled edges do not form a connected graph and in fact may form  a significantly small subset of the edge set. In addition, we plan to conduct more extensive numerical simulations as well as evaluations on real datasets in the future. 

\bibliographystyle{abbrv}
\bibliography{sigproc}

\appendix

We provide the proof of Theorem~\ref{Thm:RateK} here, similar analysis has also been done in \cite{Strohmer, Deanna1, gower2015randomized}. 

Since $x^*$ is a solution to $Ax = b$, we have
\begin{displaymath}
b_i - \langle a_i, x^*\rangle = 0 \ \forall \ i
\end{displaymath}
Then we have the following sequence of equations.
\begin{eqnarray*}
&&\hspace{-.2in}x(n+1) - x^* = (x(n) - x^*) - \sum_iI\{\xi(n) = i\}\times \\
&& \hspace{1.4in}\left(\frac{\langle a_i, x(n) - x^*\rangle}{\|a_i\|^2}\right)a_i^T \\
\Longrightarrow && \\
&&\hspace{-.2in} e(n+1) = e(n) - \sum_iI\{\xi(n) = i\}\langle\check{a}_i, e(n)\rangle\check{a}_i^T \\
\Longrightarrow && \\
&&\hspace{-.2in}\|e(n+1)\|^2 = \|e(n)\|^2 - \sum_iI\{\xi(n) = i\}|\langle \check{a}_i, e(n)\rangle|^2  \\
&&\hspace{.45in}= \|e(n)\|^2 \cdot \left(\!1 - \! \sum_iI\{\xi(n) = i\}|\langle \check{a}_i, \check{e}(n)\rangle|^2\right).
\end{eqnarray*}

\remove{
\noindent $\Longrightarrow$ \\
\begin{displaymath}
e(n+1) = e(n) - \sum_iI\{\xi(n) = i\}\langle\check{a}_i, e(n)\rangle\check{a}_i
\end{displaymath}

\noindent $\Longrightarrow$ \\
\begin{eqnarray*}
\|e(n+1)\|^2 &=& \|e(n)\|^2 - \sum_iI\{\xi(n) = i\}|\langle \check{a}_i, e(n)\rangle|^2  \\
&=& \|e(n)\|^2 \times \\
&&\left(1 - \sum_iI\{\xi(n) = i\}|\langle \check{a}_i, \check{e}(n)\rangle|^2\right).
\end{eqnarray*}
}
\noindent Taking expectation on both sides, we have
\begin{eqnarray*}
E\left[\|e(n+1)\|^2\right] &=& E\Big[ \|e(n)\|^2 \times \\
&&\left(1 - \sum_iI\{\xi(n) = i\}|\langle \check{a}_i, \check{e}(n)\rangle|^2\right)\Big]
\end{eqnarray*}
\begin{eqnarray*}
&=& E\Big[ E\Big[\|e(n)\|^2 \times \\
&&\left(1 - \sum_iI\{\xi(n) = i\}|\langle \check{a}_i, \check{e}(n)\rangle|^2\right) | e(n)\Big]\Big] \\
&=& E\Big[ \|e(n)\|^2\left(1 - \sum_ip_i|\langle \check{a}_i, \check{e}(n)\rangle|^2\right)\Big] \\
&\leq& E\Big[ \|e(n)\|^2 \times \\
&&\left(1 - \min_{\{s : \|s\| = 1, s \in H_0\}}\sum_ip_i|\langle \check{a}_i, s\rangle|^2\right)\Big] \\
&=& (1 - \lambda_{min}) \cdot E\left[\|e(n)\|^2\right].
\end{eqnarray*}
Since $\lambda_{min} \in (0, 1)$, the second claim follows. The first claim then follows from the Borel-Cantelli lemma, combined with Markov's inequality.



\end{document}